\newtheoremstyle{example}{\topsep}{\topsep}%
{}
{}
{\bfseries}
{}
{\newline}
{\thmname{#1}\thmnumber{ #2}\thmnote{ #3}}
\theoremstyle{example}
\title{Probabilistic Inference Modulo Theories\thanks{This StarAI-16 paper is a very close revision of [de Salvo Braz \emph{et al.}, 2016]. }}
\author{
	Rodrigo de Salvo Braz\\
	SRI International\\
	Menlo Park, CA, USA
	\And
	Ciaran O'Reilly\\
	SRI International\\
	Menlo Park, CA, USA
	\And
	Vibhav Gogate\\
	U. of Texas at Dallas\\
	Dallas, TX, USA
	\And
	Rina Dechter\\
	U. of California, Irvine\\
	Irvine, CA, USA
}
\date{\today}
\DeclarePairedDelimiter{\floor}{\lfloor}{\rfloor}
\newtheorem{theorem}{Theorem}[section]
\renewcommand{\gets}{\leftarrow}
\newcommand{\algo}{SGDPLL($T$)\xspace}
\newcommand{\sgve}{SGVE($T$)\xspace}
\newcommand{\ifte} [3]{\ensuremath{\mathtt{if}\,#1\,\mathtt{\;then}\,#2\,\mathtt{\;else}\,#3}\xspace}
\newcommand{\iftt}  {\mathtt{if }\;}
\newcommand{\thentt}{\;\mathtt{ then }\;}
\newcommand{\elsett}{\;\mathtt{ else }\;}
\newcommand{\true}{\ensuremath{\textsc{true}}\xspace}
\newcommand{\false}{\ensuremath{\textsc{false}}\xspace}
\renewcommand{\L}{\ensuremath{\mathcal{L}}\xspace}
\newcommand{\C}{\ensuremath{\mathcal{C}}\xspace}
\newcommand{\TL}{\ensuremath{T_{\L}}\xspace}
\newcommand{\TC}{\ensuremath{T_{\C}}\xspace}
\newcommand{\by}{\ensuremath{\mathbf{y}}\xspace}
\newcommand{\T}{\ensuremath{T}\xspace}
\newcommand{\define}[1]{\textbf{#1}}
\newcommand{\mycaption}[1]{\caption{\small{#1}}}
\newcommand{\overbar}[1]{\mkern 1.5mu\overline{\mkern-1.5mu#1\mkern-1.5mu}\mkern 1.5mu}
\newcommand{\eat}[1]{}
\begin{document}
	
\maketitle

\begin{abstract}
	We present \algo, an algorithm that solves (among many other problems) probabilistic inference modulo theories, that is, inference problems over probabilistic models defined via a logic theory provided as a parameter (currently, propositional, equalities on discrete sorts, and inequalities, more specifically difference arithmetic, on bounded integers).
	While many solutions to probabilistic inference over logic representations have been proposed,
	\algo is simultaneously (1) lifted, (2) exact and (3) modulo theories, that is, parameterized by a background logic theory.
	This offers a foundation for extending it to rich logic languages such as data structures and relational data.
	By lifted, we mean algorithms with constant complexity in the domain size (the number of values that variables can take). We also detail a solver for summations with difference arithmetic and show experimental results from a scenario in which \algo is much faster than a state-of-the-art probabilistic solver.
\end{abstract}

\section{Introduction}

\nocite{desalvobraz2016probabilistic} 
High-level, general-purpose uncertainty representations as well as fast inference and learning for them are important goals in Artificial Intelligence.
In the past few decades, graphical models have made tremendous progress 
towards achieving these goals, but even today their main methods can only support very
simple types of representations such as tables and weight matrices that exclude logical constructs
such as relations, functions, arithmetic, lists, and trees.
For example, consider the following conditional probability distributions,
which would need to be either automatically expanded into large tables or, at best, decision diagrams (a process called \emph{propositionalization}),
or manipulated in a manual, ad hoc manner,
in order to be processed by mainstream probabilistic inference algorithms from the graphical models literature:
\begin{itemize}
	\item $P(x > 10 \,|\, y \neq 98 \vee z \leq 15) = 0.1$,\\ for $x,y,z \in \{1,\dots,1000\}$
	\item $P(x \neq \id{Bob} \,|\, \id{friends(x,\id{Ann})}) = 0.3$
\end{itemize}	
Early work in Statistical Relational Learning \cite{getoor&taskar07} offered more expressive languages that used relational logic to specify probabilistic models but relied on conversion to conventional representations to perform inference,
which can be very inefficient.
To address this problem, lifted probabilistic inference  algorithms \cite{poole03,braz07,gogate&domingos11b,broeck&al11}
were proposed for efficiently processing logically specified models at the abstract first-order level. However, even these algorithms can only handle languages having limited expressive power (e.g., function-free first-order logic formulas).
More recently,
several probabilistic programming languages \cite{goodman12church} have been proposed that enable probability distributions to be specified using high-level programming languages (e.g., Scheme). However, the state-of-the-art of inference over these languages is essentially approximate inference methods that operate over a propositional (grounded) representation.

We present \algo, an algorithm that solves (among many other problems) probabilistic inference on models defined over higher-order logical representations.
Importantly, the algorithm is agnostic with respect to which particular logic theory is used,
which is provided to it as a parameter.
We have so far developed solvers for propositional, equalities on categorical sorts, and inequalities, more specifically difference arithmetic, on bounded integers (only the latter is detailed in this paper, as an example).
However, \algo offers a foundation for extending it to richer theories involving relations, arithmetic, lists and trees.
While many algorithms for probabilistic inference over logic representations
have been proposed, \algo is simultaneously
(1) lifted,
(2) exact\footnote{Our emphasis on exact inference, which is impractical for most real-world problems, is due to the fact that it is a needed basis for flexible and well-understood approximations (e.g., Rao-Blackwellised sampling).}
and
(3) modulo theories.
By lifted, we mean algorithms with constant complexity in the domain size (the number of values that variables can take).

\algo generalizes the  
Davis-Putnam-Logemann-Loveland (DPLL) algorithm for solving the satisfiability problem in the following ways:
(1) while DPLL only works on propositional logic,
\algo takes (as mentioned) a logic theory as a parameter;
(2) it solves many more problems than satisfiability on boolean formulas,
including summations over real-typed expressions,
and (3) it is \emph{symbolic}, accepting input with free variables
(which can be seen as constants with unknown values)
in terms of which the output is expressed.

Generalization (1) is similar to the generalization of DPLL
made by Satisfiability Modulo Theories (SMT) \cite{barrett&al09,moura&al07,ganzinger04dpllt},
but SMT algorithms require only satisfiability solvers of their theory parameter
to be provided, whereas \algo may require solvers for harder tasks (including model counting).
Figures \ref{fig:dpll} and \ref{fig:sgdpllt-example}
illustrate how both DPLL and \algo work
and highlight their similarities and differences.

Note that \algo is not a \emph{probabilistic} inference algorithm in a \emph{direct} sense,
because its inputs are not defined as probability distributions, random variables, or
any other concepts from probability theory.
Instead, it is an \emph{algebraic} algorithm defined in terms of expressions, functions, and quantifiers.
However, probabilistic inference on rich languages can be reduced to 
tasks that \algo can efficiently solve, as shown in Section \ref{sec:probabilistic-inference}.

The rest of this paper is organized as follows:
Section \ref{sec:dpll-smt-and-sgdpllt} describes how \algo generalizes DPLL and SMT algorithms
Section \ref{sec:t-problems-and-t-solutions} defines $T$-problems and $T$-solutions, Section \ref{sec:sgdpllt} describes \algo that solves $T$-problems,
Section \ref{sec:probabilistic-inference} explains how to use \algo to solve probabilistic inference modulo theories, Section \ref{sec:experiment} describes a proof-of-concept experiment comparing our solution to a state-of-the-art probabilistic solver, Section \ref{sec:related-work} discusses related work, and Section \ref{sec:conclusion} concludes. A specific solver for summation over difference arithmetic and polynomials is described in Appendices \ref{sec:difference-arithmetic-base-case} and \ref{sec:faulhaber}.

\section{DPLL, SMT and \algo}
\label{sec:dpll-smt-and-sgdpllt}

\begin{figure}[t]
	\centerline{\includegraphics[scale=0.7]{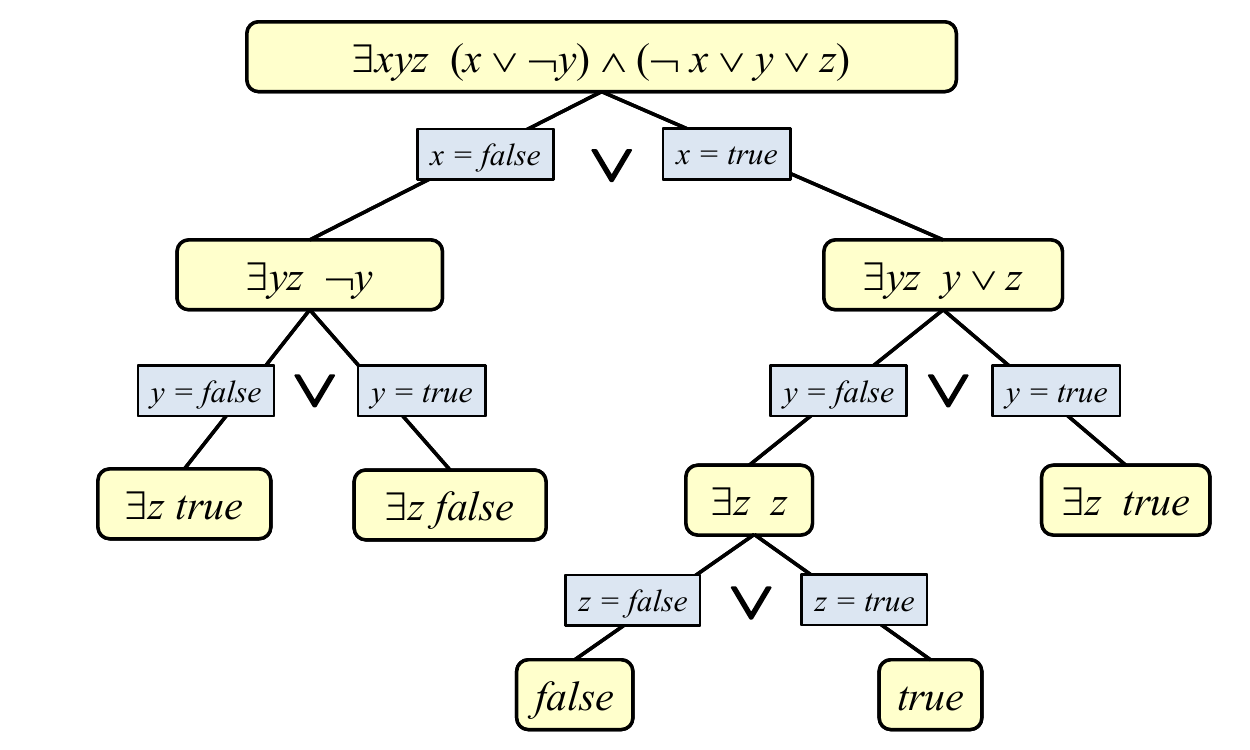}}
	\mycaption{Example of DPLL's search tree for the existence of satisfying assignments. We show the full tree even though the search typically stops when the first satisfying assignment is found.}
	\label{fig:dpll}
\end{figure}

\begin{figure}[t]
	\centerline{\includegraphics[scale=0.7]{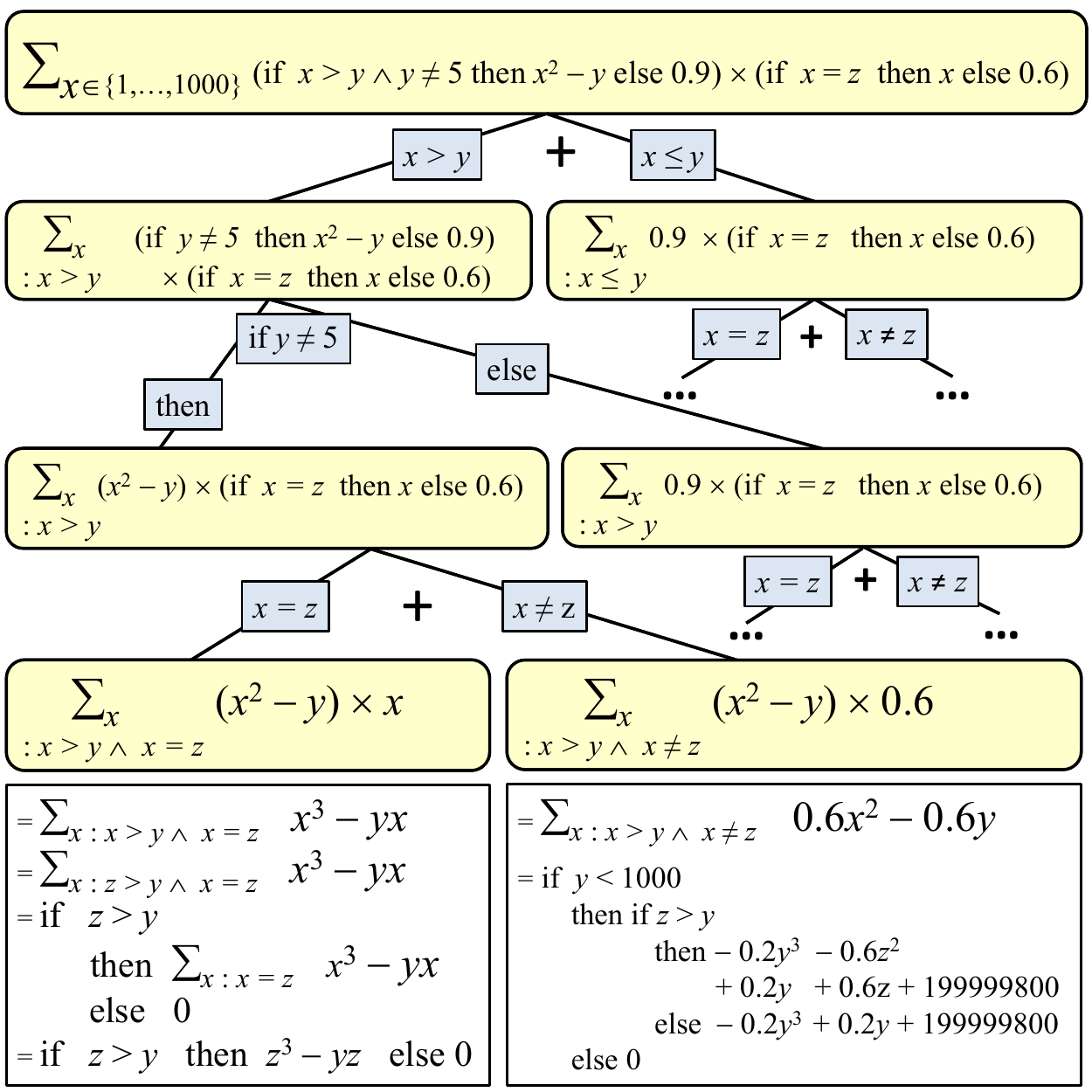}}
	\mycaption{
		\algo for summation with a background theory of difference arithmetic on bounded integers.
		Variables $x,y,z$ are in $\{1,\dots,1000\}$ but \algo does not iterate over all these values.
		It splits the problem according to literals in the background theory, simplifying it until 
		the sum is over a literal-free expression (here, polynomials).
		Splits on literals in the quantified variable $x$ split its quantifier $\sum$
		and the solutions to the sub-problems are combined by $+$
		(\emph{quantifier-splitting} as explained in Section \ref{sec:sgdpllt}).
		The split on $y \neq 5$ does not involve index $x$,
		so it creates an if-then-else expression (\emph{if-splitting}).
		Literal $y\neq 5$ (and its negation) does not need to be in the sub-solutions,
		from which it is simplified away;
		it will be present in the final solution only, as the if-then-else condition.
		When the base case with a literal-free expression is obtained,
		the specific theory solver computes its solution
		as detailed in the Appendices (lower rectangular boxes).
		The figure omits the simplification
		of the overall resulting expression by summation of sub-solutions and possible
		elimination of redundant literals.
		Problems with multiple $\sum$ quantifiers are solved by successively solving the innermost one,
		treating the indices of external sums as free variables.
	}
	\label{fig:sgdpllt-example}
\end{figure}

The \define{Davis-Putnam-Logemann-Loveland (DPLL)} algorithm \cite{davis&al62} solves the \define{satisfiability} (or \define{SAT}) problem. SAT consists of determining whether a propositional formula $F$, expressed in conjunctive normal form (CNF), has a solution or not. A CNF is a conjunction ($\wedge$) of clauses where a clause is a disjunction ($\vee$) of literals. A literal is either a proposition (that is, a Boolean variable) or its negation. A solution to a CNF is an assignment of values from the set $\{\true,\false\}$ to all propositions in $F$ such that at least one literal in each clause in $F$ is assigned to \true.

\begin{footnotesize}
	\begin{algorithm}[h]
		
		\begin{codebox}
			\Procname{$\proc{DPLL}(F)$}
			\zi $F$: a formula in CNF.
			\li \If {$F$ is a boolean constant}
			\li 	\Then
			\Return $F$
			\li 	\Else
			$v \gets $ pick a variable in $F$
			\li 		$\id{Sol}_1 \gets \id{DPLL}(\id{simplify}(F | \,\,\,\, v))$
			\li 		$\id{Sol}_2 \gets \id{DPLL}(\id{simplify}(F | \neg  v))$
			\li 		\Return $\id{Sol}_1 \vee \id{Sol}_2$
			\End
		\end{codebox}
		\mycaption{A version of the DPLL algorithm.}
		\label{alg:dpll}
	\end{algorithm}
\end{footnotesize}
Algorithm \ref{alg:dpll} shows a simplified, non-optimized version of DPLL
which operates on CNF formulas. It works by recursively trying assignments for each proposition, 
one at a time, simplifying the CNF, until $F$ is a constant (\true or \false),
and combining the results with disjunction.
Figure \ref{fig:dpll} shows an example of the execution of DPLL. DPLL is the basis for modern SAT solvers which improve it by adding sophisticated techniques such as unit propagation, watch literals, and clause learning \cite{een&sorenson03,maric09formalization}.

\define{Satisfiability Modulo Theories (SMT)} algorithms \cite{barrett&al09,moura&al07,ganzinger04dpllt} generalize DPLL and can determine the satisfiability of a Boolean formula expressed in first-order logic, where some function and predicate symbols have specific interpretations. Examples of predicates include equalities, inequalities, and uninterpreted functions, which can then be evaluated using rules of real arithmetic. SMT algorithms condition on the \emph{literals} of a background theory $T$, looking for a truth assignment to these literals that satisfies the formula.
While a SAT solver is free to condition on a proposition, assigning it to either \textsc{true} or \textsc{false}
regardless of previous choices (truth values of propositions are independent from each other),
an SMT solver needs to also check whether a choice for one literal is \emph{consistent}
with the previous choices for others, according to $T$. This is done by a theory-specific model 
checker, provided as a parameter.

\define{\algo} is, like SMT algorithms, \define{modulo theories} but further generalizes DPLL by
being \define{symbolic} and \define{quantifier-parametric} (thus ``Symbolic Generalized DPLL($T$)'').
These three features can be observed in the problem being solved by \algo in Figure \ref{fig:sgdpllt-example}:
\begin{align*}
	\sum_{x \in \{1,\dots,1000\}} & \quad  (\ifte{x > y \wedge y \neq 5}{x^2-y}{0.9}) \\
	& \quad \times (\ifte{x = z}{x}{0.6})
\end{align*}
In this example, the problem being solved
requires more than propositional logic theory since equality, inequality
and other functions are involved.
The problem's quantifier is a summation,
as opposed to DPLL and SMT's existential quantification $\exists$. Also, the output will be \emph{symbolic} in $y$ and $z$ because these variables are not being quantified,
as opposed to DPLL and SMT algorithms which implicitly assume all variables to be quantified.

Before formally describing \algo, we will further comment on its three key generalizations.\\
\textbf{1. Quantifier-parametric.}
Satisfiability can be seen as computing the value of an existentially quantified formula;
the existential quantifier can be seen as an indexed form of disjunction,
so we say it is \define{based} on disjunction.
\algo generalizes SMT algorithms 
by solving \define{\emph{any} quantifier $\bigoplus$
based on a commutative associative operation $\oplus$},
provided that a corresponding theory-specific solver is available for base case problems,
as explained later.
Examples of ($\bigoplus$, $\oplus$, ) pairs are ($\forall$,$\wedge$),
($\exists$,$\vee$), ($\sum$,$+$), and ($\prod$,$\times$).
Therefore \algo can solve not only satisfiability (since disjunction is commutative and associative),
but also validity (using the $\forall$ quantifier), sums, products, model counting, weighted model counting, maximization, among others, for propositional logic-based, and many other, theories.\\
\textbf{2. Modulo Theories.}
SMT generalizes the propositions in SAT to literals in a given theory $T$,
but the theory connecting these literals remains that of boolean connectives.
\algo takes a theory $T=(\TC,\TL)$, composed of a \define{constraint theory} \TC and an \define{input theory} \TL. DPLL propositions are generalized to literals in \TC in \algo,
whereas the boolean connectives are generalized to functions in \TL.
In the example above, \TC is the theory of difference arithmetic on bounded integers,
whereas \TL is the theory of $+, \times$, boolean connectives and $\ifte{}{}{}$.
Of the two, \TC is the crucial one, on which inference is performed,
while
\TL is used simply for the simplifications after conditioning,
which takes time at most linear in the input expression size.\\
\textbf{3. Symbolic.}
Both SAT and SMT can be seen as computing the value of an existentially quantified formula
in which \emph{all} variables are quantified, and which is always equivalent to either \textsc{true} or \textsc{false}.
\algo further generalizes SAT and SMT by accepting quantifications over \emph{any subset} of the variables in its input expression (including the empty set).
The non-quantified variables are \define{free variables},
and the result of the quantification will typically depend on them.
Therefore, \algo's output is
a \define{symbolic} expression in terms of free variables.
Section \ref{sec:t-problems-and-t-solutions} shows an example of a symbolic solution.

Being symbolic allows \algo to conveniently solve a number of problems, including quantifier elimination and exploitation of factorization in probabilistic inference, as discussed in Section \ref{sec:probabilistic-inference}.

\section{$T$-Problems and $T$-Solutions}
\label{sec:t-problems-and-t-solutions}

\algo receives a \define{$T$-problem} (or, for short, a \define{problem}) of the form
\begin{equation}
	\bigoplus_{x:F(x,\by)} E(x,\by),
	\label{eqn:t-problem}
\end{equation}
where $x$ is an \define{index variable} quantified by $\bigoplus$ and
subject to constraint $F(x,\by)$ in \TC, with possibly the presence of \define{free variables} \by,
and $E(x,\by)$ an expression in \TL.
$F(x,\by)$ is a conjunction of literals in \TC, that is, a conjunctive clause.
An example of a problem is
\[\sum_{x:3 \leq x \wedge x \leq y} \ifte{x > 4}{y}{10 + z},\]
for $x, y, z$ bounded integer variables in, say, $\{1,\dots,20\}$.
The index is $x$ whereas $y,z$ are free variables.

A \define{$T$-solution} (or, for short, simply a \define{solution}) to a problem is
simply a quantifier-free expression in \TL equivalent to the problem.
Note that solution will often contain literals and conditional expressions
dependent on the free variables.
%
%
For example, the problem $$\sum_{x:1 \leq x \wedge x \leq 10} \ifte{y > 2 \wedge w > y}{y}{4}$$
has an equivalent conditional solution $$\ifte{y > 2}{\ifte{w > y}{10y}{40}}{40}.$$

For more general problems with \define{multiple quantifiers}, we simply successively solve the innermost problem
until all quantifiers have been eliminated.

\section{\algo}
\label{sec:sgdpllt}
In this section we provide the details of \algo,
described in Algorithm \ref{alg:sgdpllt}
and exemplified in Figure \ref{fig:sgdpllt-example}.

\subsection{Solving Base Case $T$-Problems}

A problem, as defined in Equation (\ref{eqn:t-problem}), is in \define{base case} if $E(x,\by)$ contains no literals in \TC.

In this paper, $\T=(\TC,\TL)$ where \TL is polynomials over bounded integer variables,
and \TC is \define{difference arithmetic} \cite{moura&al07},
with atoms of the form $x < y$ or $x \leq y + c$, where $c$ is an integer constant.
Strict inequalities $x < y + c$ can be represented as $x \leq y + c - 1$
and the negation of $x \leq y + c$ is $y \leq x -c - 1$.
From now on, we shorten $a \leq x \wedge x \leq b$ to $a \leq x \leq b$.

Therefore, a base case problem for this theory is of the form $\sum_{x:F(x,\by)} P(x,\by)$,
where $x$ is the index, $\by$ is a tuple of free variables,
$F(x,\by)$ is a conjunction of difference arithmetic literals,
and $P(x,\by)$ is a polynomial over $x$ and \by.
We show how to fully solve difference arithmetic base cases in Appendices \ref{sec:difference-arithmetic-base-case} and \ref{sec:faulhaber}.

\subsection{Solving Non-Base Case $T$-Problems}

Non-base case problems (that is, those in which $E(x,\by)$ of Equation (\ref{eqn:t-problem}) contains literals in \TC) are solved by reduction to base-case ones.
While base cases are solved by theory-specific solvers,
the reduction from non-base case problems to base case ones is \emph{theory-independent}.
This is significant as it allows \algo to be expanded with new theories
by providing a solver only for base case problems,
analogous to the way SMT solvers require theory solvers only for conjunctive clauses, as opposed to general formulas, in those theories.

The reduction mirrors DPLL, by selecting a \define{splitter literal} $L$ present in $E(x,\by)$ to split
the problem on, generating two simpler problems:
\begin{itemize}
	
	\item \define{quantifier-splitting} applies when $L$ contains the index $x$.
	Then two sub-problems are created, one in which $L$ is added to $F(x,\by)$,
	and another in which $\neg L$ is.
	Their solution is then combined by the quantifier's operation ($+$ for the case of $\sum$).
	
	For example, consider:
	\[\sum_{x:3 < x \leq 10} \ifte{x > 4}{y}{(10 + z)}\]
	To remove the literal from $E(x,\by)$, we add the literal ($x >4$) and its negation ($x \leq 4$) to the constraint on $x$, yielding
	two base-case problems:
	\begin{align*}
		& \Bigl( \sum_{x:x > 4 \wedge 3 < x \leq 10} y \Bigr) + \Bigl( \sum_{x:x \leq 4 \wedge 3 < x \leq 10} (10 + z)\Bigr). \end{align*}
	
	\item \define{if-splitting} applies when $L$ does \emph{not} contain the index $x$.
	Then $L$ becomes the condition of an \ifte{}{}{} expression and the two simpler sub-problems are its \emph{then} and \emph{else} clauses.
	
	For example, consider
	\[\sum_{x:3 < x \leq 10} \ifte{y > 4}{y}{10}.\]
	Splitting on $y > 4$ reduces the problem to
	
	\begin{align*}
		& & \ifte{y > 4}{\sum_{x:3 < x \leq 10} y}{\sum_{x:3 < x \leq 10} 10},
	\end{align*}
	containing two base-case problems.
	
\end{itemize}


The algorithm terminates because each splitting generates sub-problems
with one less literal in $E(x,\by)$, eventually obtaining base case problems.
It is sound because each transformation results in an expression equivalent to the previous one.

To be a valid parameter for \algo, a $(T,\oplus)$-solver $S_T$ for theory $T=(\TL,\TC)$ must,
given a problem $\bigoplus_{x:F(x,\by)} E(x,\by)$,
recognize whether it is in base form and, if so,
provide a solution $base_T(\bigoplus_{x:F(x,\by)} E(x,\by))$.

The algorithm is presented as Algorithm \ref{alg:sgdpllt}.
Note that it does \emph{not} depend on difference arithmetic theory,
but can use a \define{solver for any theory satisfying the requirements above}.

If the $(T,\oplus)$-solver implements the operations above in constant time in the domain size (the size of their types), then it follows that \algo will have complexity \define{independent of the domain size}.
This is the case for the solver for difference arithmetic and will typically be the case for many other solvers.

\begin{algorithm}
	\mycaption{Symbolic Generalized DPLL (\algo),\quad \quad \quad 
		omitting pruning, heuristics and optimizations.}
	\label{alg:sgdpllt}
	
	\begin{codebox}
		\Procname{$\proc{\algo}(\bigoplus_{x:F(x,\by)} E(x,\by))$}
		\zi Returns a $T$-solution for $\bigoplus_{x:F(x,\by)} E(x,\by)$.
		\zi
		\li \If {$E(x,\by)$ is literal-free (base case)}
		\li		\Then \Return $base_T(\bigoplus_{x:F(x,\by)} E(x,\by))$
		\li		\Else 
		\li     	$L \gets $ a literal in $E(x,\by)$
		\li			$E'\;  \gets E$ with $L$ replaced by \true and simplified
		\li			$E'' \gets E$ with $L$ replaced by \false and simplified
		\li			\If {$L$ contains index $x$}
		\li				\Then 
		$\id{Sub}_1 \gets \bigoplus_{x:F(x,\by) \wedge L}     \;\; E'$
		\li					$\id{Sub}_2 \gets \bigoplus_{x:F(x,\by) \wedge \neg L}  E''$
		\li				\Else \Comment $L$ does not contain index $x$:
		\li          		$\id{Sub}_1 \gets \bigoplus_{x:F(x,\by)}  \;E'$
		\li 	     		$\id{Sub}_2 \gets \bigoplus_{x:F(x,\by)}  \;E''$
		\End
		\li 		$S_1 \gets \text{\algo}(Sub_1)$
		\li 		$S_2 \gets \text{\algo}(Sub_2)$
		\li     	\If {$L$ contains index $x$}
		\li         	\Then \Return $S_1 \oplus S_2$
		\li         	\Else \Return the expression $\ifte{L}{S_1}{S_2}$
		\End
		\End
	\end{codebox}
	
\end{algorithm}

\subsection{Optimizations}
\label{sec:optimizations}
In the simple form presented above,
\algo may generate solutions such as $\ifte{x = 3}{\ifte{x \neq 4}{y}{z}}{w}$
in which literals are implied (or negated) by the context they are in,
and are therefore \define{redundant}.
Redundant literals can be eliminated by keeping a conjunction of all choices (sides of literal splittings)
made at any given point (the \define{context})
and using any SMT solver to incrementally decide when
a literal or its negation is implied, thus \define{pruning the search} as soon as possible.
Note that a $(T,\oplus)$-solver for \algo appropriate for $\exists$ can be used for this,
although here there is the opportunity to leverage the very efficient SMT systems already available.

Modern SAT solvers benefit enormously from \define{unit propagation},
\define{watched literals} and \define{clause learning} \cite{een&sorenson03,maric09formalization}.
In DPLL, unit propagation is performed when all but one literal $L$
in a clause are assigned \false.
For this \define{unit clause}, and as a consequence, for the CNF problem, to
be satisfied, $L$ must be \true and is therefore immediately assigned that value wherever it occurs, without the need to split on it.
Detecting unit clauses, however, is expensive if performed
by naively checking all clauses at every splitting.
Watched literals is a data structure scheme that allows only a
small portion of the literals to be checked instead.
Clause learning is based on detecting a subset of jointly unsatisfiable literals
when the splits made so far lead to a contradiction, and keeping it
for detecting contradictions sooner as the search goes on.
In the \algo setting, unit propagation, watched literals and clause learning can be generalized to its
not-necessarily-Boolean expressions;
we leave this presentation for future work.

\section{Probabilistic Inference Modulo Theories}
\label{sec:probabilistic-inference}

Let $P(X_1 = x_1, \dots, X_n = x_n)$ be the joint probability distribution on random variables $\{X_1, \dots, X_n\}$.
For any tuple of indices $t$, we define $X_t$ to be the tuple of variables indexed by the indices in $t$,
and abbreviate the assignments $(X=x)$ and $(X_t=x_t)$ by simply $x$ and $x_t$, respectively.
Let $\bar{t}$ be the tuple of indices in $\{1,\dots,n\}$ but not in $t$.

The \define{marginal probability distribution} of a subset of variables $X_q$ is
one of the most basic tasks in probabilistic inference, defined as
\begin{align*}
	P(x_q) = \sum_{x_{\bar{q}}} P(x)
\end{align*}
which is a summation on a subset of variables occurring in an input expression,
and therefore solvable by \algo.

If $P(x)$ is expressed
in the language of input and constraint theories appropriate for \algo
(such as the one shown in Figure \ref{fig:sgdpllt-example}), then it can be solved by \algo,
\emph{without} first converting its representation to a much larger one based on tables.
The output will be a summation-free expression in the assignment variables $x_q$
representing the marginal probability distribution of $X_q$.

Let us show how to represent $P(x)$ with an expression in \TL through an example.
Consider a hypothetical generative model involving random variables with bounded integer values and describing the influence of variables such as the number of terror attacks, the Dow Jones index and newly created jobs on the number of people who like an incumbent and an challenger politicians:
\begin{align*}
	& \id{attacks} \sim \id{Uniform}(0..20) \\
	& \id{newJobs} \sim \id{Uniform}(0..100000) \\
	& \id{dow} \sim \id{Uniform}(11000..18000) \\
	& \id{likeChallenger} \sim \id{Uniform}(0..N) \\
	& P(\id{likeIncumbent} \in 0..N | \id{dow}, \id{newJobs}, \id{attacks}) \\
	& =
	\begin{cases}
		\frac{0.4}{\floor{0.7N}}\text{, if }\id{dow} > 16000 \wedge \id{newJobs} > 70000) \\
		\qquad\qquad \wedge \; \id{likeIncumbent} < \floor{0.7N} \\
		\frac{0.6}{N+1-\floor{0.7N}}\text{, if }\id{dow} > 16000 \wedge \id{newJobs} > 70000) \\
		\qquad\qquad\quad\quad\;\;  \wedge \; \id{likeIncumbent} \geq  \floor{0.7N}\\
		\frac{0.8}{\floor{0.5N}}\text{, if }\id{dow} < 13000 \wedge \id{newJobs} < 30000) \\
		\qquad\qquad \wedge \; \id{likeIncumbent} < \floor{0.5N} \\
		\frac{0.2}{N+1-\floor{0.5N}}\text{, if }\id{dow} < 13000 \wedge \id{newJobs} < 30000) \\
		\qquad\qquad\qquad \wedge \; \id{likeIncumbent} \geq  \floor{0.5N}\\
		\frac{0.9}{\floor{0.6N}}\text{, none of the above and }(\id{attacks} \leq 4) \\
		\qquad\qquad \wedge \; \id{likeIncumbent} < \floor{0.6N}\\
		\frac{0.1}{N + 1 -\floor{0.6N}}\text{, none of the above and }(\id{attacks} \leq 4) \\
		\qquad\qquad\qquad \wedge \; \id{likeIncumbent} \geq  \floor{0.6N} \\
		\frac{1}{N+1}\text{, otherwise }
	\end{cases}
\end{align*}
which indicates that, if the Dow Jones index is above 16000 or there were more than 70000 new jobs,
then there is a $0.4$ probability that the number of people who like the incumbent politician
is below around 70\% of $N$ people (and that probability is uniformly distributed among those $\floor{0.7N}$ values), with the remaining $0.6$ probability mass uniformly distributed over the remaining $N+1-\floor{0.7N}$ values.
Similar distributions hold for other conditions.
Note that $N$ is a known parameter and the actual representation will contain the evaluations of its expressions. For example, for $N=10^8$, $0.8/\floor{0.5N}$ is replaced by $1.6 \times 10^{-8}$.

The joint probability distribution
\[P(\id{attacks}, \id{newJobs}, \id{dow}, \id{likeChallenger}, \id{likeIncumbent})\]
is simply the product of
$P(\id{attacks})$, $P(\id{newJobs})$ and so on.
$P(\id{attacks})$
can be expressed by
\[\ifte{\,\id{attacks} \geq 0 \wedge \id{attacks} \leq 20}{1/21}{0}\]
because of its distribution $\id{Uniform}(0..20)$,
and the other uniform distributions are represented analogously.
$P(\id{likeIncumbent}| \id{dow}, \id{newJobs}, \id{attacks})$ is represented by the expression 
\begin{align*}
	&\iftt{\id{dow} > 16000 \wedge \id{newJobs} > 70000} \\
	& \quad \thentt{\iftt{\id{likeIncumbent} < \floor{0.7N}}}\\
	& \quad \quad   \thentt{{\frac{0.4}{\floor{0.7N}}}}\\
	& \quad \quad   \elsett{{\frac{0.6}{N+1-\floor{0.7N}}}}\\
	& \quad \elsett{\iftt{\id{dow} < 13000 \wedge \id{newJobs} < 30000}\; {\dots}}
\end{align*}
again noting that $N$ is fixed and the actual expression contains the constants computed from $\floor{0.7N}$,
$\frac{0.4}{\floor{0.7N}}$, and so on.\footnote{This is due to our polynomial language exclusion of non-constant denominators; \cite{afshar16closedform} describes a piecewise polynomial fraction algorithm that can be the basis of another \algo theory solver allowing this.}

Other probabilistic inference problems can be also solved by \algo.
\define{Belief updating} consists of computing
the \define{posterior probability} of $X_q$
given evidence on $X_e$,
which is defined as
\begin{align*}
	P(x_q | x_e) = \frac{P(x_q, x_e)}{P(x_e)}
	= \frac{P(x_q, x_e)}{\sum_{x_q} P(x_q, x_e)}
\end{align*}
which can be computed with two applications of \algo:
first, we obtain a summation-free expression $S$ for $P(x_q, x_e)$, which is $\sum_{x_{\overbar{(q,e)}}} P(x)$,
and then again $S$ for $\sum_{x_q} P(x_q, x_e)$, which is $\sum_{x_q} S$.

We can also use \algo to compute the \define{most likely assignment} on $X_q$, defined by $\max_{x_q} P(x)$, since $\max$ is a commutative and associative operation.

Applying \algo in the manner above does not take \define{advantage
of factorized representations} of joint probability distributions,
a crucial aspect of efficient probabilistic inference.
However, it can be used as a basis for
an algorithm, \define{Symbolic Generalized Variable Elimination Modulo Theories (\sgve)},
analogous to Variable Elimination (VE) \cite{zhang&poole94,dechter99} for graphical models,
that exploits factorization.
\sgve works in the exact same way VE does, but using \algo whenever VE uses
marginalization over a table.
Note that \algo's symbolic treatment of free variables is crucial for the exploitation of factorization,
since typically only a \emph{subset} of variables is eliminated at each step.
Also note that \sgve, like VE, requires the additive and multiplicative operations
to form a \emph{c-semiring} \cite{bistarelli97semiring}.

Finally, because of \algo and \sgve symbolic capabilities,
it is also possible to compute \define{symbolic query} results as functions of \emph{uninstantiated} evidence variables, without the need to iterate
over all their possible values.\footnote{This concept is also present in \cite{sanner2012symbolic}}
For the election example above with $N=10^8$,
we can compute $P(\id{likeIncumbent} > \id{likeChallenger} | \id{newJobs})$
without providing a value for $\id{newJobs}$, obtaining the symbolic result
\begin{align*}
&\iftt{\id{newJobs} > 70000} \\
&\quad \thentt{0.5173}\\
&\quad \elsett{\iftt{\id{newJobs} < 30000}} \\
&\quad \qquad \qquad  \thentt{0.4316} \\
&\quad \qquad \qquad  \elsett{0.4642}
\end{align*}
without iterating over all values of $\id{newJobs}$.
This result can be seen as a compiled form to be used when the value of $\id{newJob}$ is known,
without the need to reprocess the entire model.

%
%

\section{Experiment}
\label{sec:experiment}

We conduct a proof-of-concept experiment comparing our implementation of \algo-based \sgve (available from the corresponding author's web page)
to the state-of-the-art
probabilistic inference solver variable elimination and conditioning (VEC) \cite{gogate&dechter11},
on the election example described above.
The model is simple enough for \sgve to solve the query
$P(\id{likeIncumbent} > \id{likeChallenger} | \id{newJobs} = 80000 \wedge \id{dow} = 17000)$ exactly
in around 2 seconds on a desktop computer with an Intel E5-2630 processor,
which results in $0.6499$ for $N=10^8$.
The run time of \sgve is constant in $N$;
however, the number of values is too large for a regular solver
such as VEC to solve exactly, because the tables involved will be too large even to instantiate.
By decreasing the range of $\id{newJobs}$ to $0..100$, of $\id{dow}$ to $110..180$ and $N$ to just $500$,
we managed to use VEC but it still takes $51$ seconds to solve the problem.

\section{Related work}
\label{sec:related-work}

\algo is related to many different topics in both
logic and probabilistic inference literature,
besides the strong links to SAT and SMT solvers.

\algo is a lifted inference algorithm \cite{poole03,braz07,gogate&domingos11b},
but lifted algorithms so far have concerned
themselves only with relational formulas with equality.
We have not yet developed the theory solvers
for relational representations required for \algo to do the
same, but we intend to do so using the
already developed modulo-theories mechanism available.
On the other hand, we have presented probabilistic inference over difference arithmetic
for the first time in the lifted inference literature.

\cite{sanner2012symbolic} presents a symbolic variable elimination algorithm (SVE) for hybrid graphical models
described by piecewise polynomials. 
\algo is similar, but explicitly separates the generic
and theory-specific levels, and mirrors the structure of DPLL and SMT.
Moreover, SVE operates on Extended Algebraic Decision Diagrams (XADDs), while \algo operates directly on arbitrary expressions formed with the operators in \TL and \TC.
Finally, in this paper we present a theory solver for sums over bounded integers,
while that paper describes an integration solver for continuous numeric variables
(which can be adapted as an extra theory solver for \algo).
\cite{belle15probabilistic,belle15hashing} extends \cite{sanner2012symbolic}
by also adopting DPLL-style splitting on literals, allowing them to operate directly on general boolean formulas, and by focusing on the use of a SMT solver to prune away unsatisfiable branches.
However, it does not discuss
the symbolic treatment of free variables and its role in factorization,
and does not focus on the generic level (modulo theories) of the algorithm.

\algo generalizes several algorithms that operate on mixed networks \cite{mateescu&dechter08} -- a framework that combines Bayesian networks with constraint networks, but with a much richer representation.
By operating on richer languages, \algo also generalizes exact model counting approaches such as RELSAT \cite{bayardo&pehoushek00} and Cachet \cite{sang&al05b}, as well as weighted model counting algorithms such as ACE \cite{chavira&darwiche08} and formula-based inference \cite{gogate&domingos10}, which use the CNF and weighted CNF representations respectively.

\section{Conclusion and Future Work}
\label{sec:conclusion}

We have presented \algo and its derivation \sgve,
algorithms formally able to solve a variety of problems,
including probabilistic inference modulo theories,
that is, capable of being extended with solvers for
richer representations than propositional logic,
in a lifted and exact manner.

Future work includes additional theories and solvers of interest,
mainly among them algebraic data types and uninterpreted relations;
modern SAT solver optimization techniques such as watched literals, unit propagation and clause learning,
and anytime approximation schemes that offer guaranteed
bounds on approximations that converge to the exact solution.

\section*{Acknowledgments}
We gratefully acknowledge the support of the Defense Advanced Research Projects Agency (DARPA) Probabilistic Programming for Advanced Machine Learning Program under Air Force Research Laboratory (AFRL) prime contracts no. FA8750-14-C-0005 and FA8750-14-C-0011, and NSF grant
IIS-1254071. Any opinions, findings, and conclusions or recommendations expressed in this material are those of the author(s) and do not necessarily reflect the view of DARPA, AFRL, or the US government.

\appendix

\section{Solver for Sum and Difference Arithmetic}
\label{sec:difference-arithmetic-base-case}

This appendix describes a $T$-solver for the base case $T$-problem $\sum_{x:F(x,\by)} P(x,\by)$
for $T=(\TC,\TL)$ where \TC is difference arithmetic and \TL is the language of polynomials,
$x$ is a variable and \by is a tuple of free variables.
Because this is a base case, $P(x,\by)$ is a polynomial and contains no literals.
$F(x,\by)$ is a conjunctive clause of difference arithmetic literals.

The solver also receives, as an extra input, a conjunctive clause $C(\by)$ (a \define{context})
on free variables only,
and its output is a quantifier-free $T$-solution $S(\by)$ such that $C(\by) \Rightarrow S(\by) = \sum_{x:F(x,\by)} P(x,\by)$.
In other words, $C(\by)$ encodes the assignments to \by of interest in a given context,
and the solution needs to be equal to the problem \emph{only} when \by satisfies $C(\by)$.
The context starts with \true but is set to more restrictive formulas in the solver's recursive calls.\footnote{The use of a context here is similar to the one mentioned as an optimization in Section \ref{sec:optimizations}, but while contexts are optional in the main algorithm, it will be seen in the
proof sketch of Theorem \ref{the:difference-arithmetic-solver} that they are required in this solver to ensure termination.}

We assume an SMT (Satisfiability Modulo Theory) solver
that can decide whether a conjunctive clause in the background
theory (here, difference arithmetic) is satisfiable or not.

The intuition behind the solver is gradually removing ambiguities
until we are left with a single lower bound, a single upper bound,
and unique disequalities on index $x$.
For example, if the index $x$ has two lower bounds (two literals $x > y$ and $x > z$),
then we split on $y > z$ to decide which lower bound implies the other,
eliminating it.
Likewise, if there are two literals $x \neq y$ and $x \neq z$,
we split on $y = z$, either eliminating the second one if this is true,
or obtaining a uniqueness guarantee otherwise.
Once we have a single lower bound, single upper bound and unique disequalities,
we can solve the problem more directly, as detailed in Case 8 below.

Let $\id{Sum}(x,F(x,\by),P(x,\by),C(\by))$ be the result of invoking the solver
its inputs, and $\alpha$, $\beta$ stand for any expression.
The following cases are applied in order:

\paragraph{Case 0} if $C(\by)$ is unsatisfiable, return any expression (say, $0$).

\paragraph{Case 1}  if any literals in $F(x,\by)$ are trivially contradictory,
such as $\alpha \neq \alpha$, $\alpha < \alpha$, $\alpha \neq \beta$ for $\alpha$ and $\beta$ two distinct constants, return $0$.

\paragraph{Case 2} if any literals in $F(x,\by)$ are trivially true, (such as $\alpha = \alpha$ or $\alpha \geq \alpha$), or are redundant due to being identical to a previous literal,
return $\id{Sum}(x,F'(x,\by),P(x,\by),C(\by))$, for $F'(x,\by)$ equal to $F(x,\by)$ after removing such literals.

\paragraph{Case 3} if $F(x,\by)$ contains literal $x=\alpha$,
return $\id{Sum}(x,F'(x,\by),P(x,\by),C(\by))$,
for $F'(x,\by)$ equal to $F(x,\by)$ after replacing every \emph{other} occurrence of $x$ with $\alpha$.

\paragraph{Case 4} if any literal $L$ in $F(x,\by)$ does not involve $x$,
return the expression
\[\ifte{L}{\id{Sum}(x,F'(x,\by),P(x,\by),C(\by) \wedge L)}{0},\] for $F'(x,\by)$ equal to $F(x,\by)$ after removing $L$.

\paragraph{Case 5} if $F(x,\by)$ contains only literal $x=\alpha$, return $P(\alpha,\by)$.

\paragraph{Case 6} if $F(x,\by)$ contains literals $x\geq\alpha$ or $x<\beta$,
return $\id{Sum}(x,F'(x,\by),P(x,\by),C(\by))$,
for $F'(x,\by)$ equal to $F(x,\by)$ after replacing such literals by $x>\alpha-1$ and $x\leq\beta+1$, respectively.
This guarantees that all lower bounds for $x$ are strict, and all upper bounds are non-strict.

\paragraph{Case 7}
if $F(x,\by)$ contains literal $x > \alpha$ ($\alpha$ is a strict lower bound), and literal $x > \beta$ or literal $x \neq \beta$,
let literal $L$ be $\alpha < \beta$.
Otherwise, if $F(x,\by)$ contains literal $x \leq \alpha$ ($\alpha$ is a non-strict upper bound), and literal $x \geq \beta$ or literal $x \neq \beta$,
let literal $L$ be $\beta \leq \alpha$.
Otherwise, if $F(x,\by)$ contains literal $x \neq \alpha$ and literal $x \neq \beta$, let $L$ be $\alpha = \beta$.
Otherwise, if $F(x,\by)$ contains literal $x > \alpha$ and literal $x \leq \beta$, let $L$ be $\alpha < \beta$.
Then, if $C(\by) \wedge L$ and $C(\by) \wedge \neg L$ are both satisfiable (that is, $C(\by)$ does not imply $\alpha = \beta$ either way),
return the expression
\begin{align*}
	&\iftt{L} \thentt{\id{Sum}(x,F(x,\by),P(x,\by),C(\by)\wedge L)}\\
	&\qquad   \elsett{\id{Sum}(x,F(x,\by),P(x,\by),C(\by)\wedge \neg L)}.
\end{align*}

\paragraph{Case 8} At this point, $F(x,\by)$ and $C(\by)$ jointly define a single strict lower bound $l$ and non-strict upper bound $u$ for $x$, and $\{\beta_1,\dots,\beta_k\}$ such that $x \neq \beta_i$ and $l < \beta_i \leq u$ for every $i \in \{1,\dots,k\}$. If $C(\by)$ implies $u - l < k$, return $0$. Otherwise, return $\id{FH}\bigl(\sum_{x : l < x \leq u} P(x,\by) \bigr) - P(\beta_1,\by) - \dots - P(\beta_k,\by)$, where $\id{FH}$ is an extended version of Faulhaber's formula
\cite{knuth93faulhaber}. The extension is presented in Appendix \ref{sec:faulhaber} and only involves simple algebraic manipulation. 
The fact that Faulhaber's formula can be used in time independent of $u-l$ renders the solver
complexity independent of the index's domain size.

\begin{theorem}
	\label{the:difference-arithmetic-solver}
	Given $x$, $F(x,\by)$, $P(x,\by)$, $C(\by)$, the solver computes $\id{Sum}(x,F(x,\by),P(x,\by),C(\by))$
	in time independent\footnote{Strictly speaking, the complexity is logarithmic in the domain size,
		if arbitrarily large numbers and infinite precision are employed,
		but constant for all practical purposes.} of the domain sizes of $x$ and \by, and
	\begin{multline*}
		\forall \by \; C(\by) \Rightarrow \\
		\id{Sum}(x,F(x,\by),P(x,\by),C(\by)) \; = \sum_{x:F(x,\by)} P(x,\by).
	\end{multline*}
\end{theorem}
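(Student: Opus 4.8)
The plan is to establish both conclusions of Theorem~\ref{the:difference-arithmetic-solver} together by strong induction on a well-founded measure $\mu(F,C)$ on the solver's arguments. The measure is chosen so that every recursive call strictly decreases it (which simultaneously yields termination) and so that the correctness equation $\forall\by\; C(\by)\Rightarrow \id{Sum}(x,F,P,C)=\sum_{x:F(x,\by)}P(x,\by)$ can be verified one case at a time, invoking the induction hypothesis on the strictly smaller sub-calls. The bound on running time then follows from a separate estimate of the cost of each individual step together with a polynomial bound on the size of the recursion tree, neither of which will involve domain sizes.

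\textbf{Correctness (case analysis).} The terminal cases $0,1,5,8$ are checked directly: Case~0 makes the implication vacuous; Case~1 makes $\{x:F(x,\by)\}$ empty for every $\by$, so both sides are $0$; Case~5 has constraint exactly $x=\alpha$, so the sum is the single term $P(\alpha,\by)$. Case~8 is the substantive terminal case: the hypothesis that $F$ and $C$ \emph{jointly} fix a single strict lower bound $l$, a single non-strict upper bound $u$ and distinct forbidden values $\beta_1,\dots,\beta_k$ with $l<\beta_i\le u$ means that whenever $C(\by)$ holds, $\{x:F(x,\by)\}=\{x:l<x\le u\}\setminus\{\beta_1,\dots,\beta_k\}$, so $\id{FH}\bigl(\sum_{x:l<x\le u}P(x,\by)\bigr)-\sum_i P(\beta_i,\by)$ (or $0$ when $C$ forces the interval too small) equals the required sum, using correctness of the extended Faulhaber evaluation of Appendix~\ref{sec:faulhaber}. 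For the recursive cases each transformation preserves the sum relative to the context: Cases~$2,3,6$ do not change $\{x:F(x,\by)\}$ at all (deleting trivially-true or duplicate literals; substituting an already-established equality $x=\alpha$ into the remaining literals; rewriting $x\ge\alpha$, $x<\beta$ into the equivalent integer forms over $>,\le$), so the claim is immediate from the induction hypothesis; Case~4 uses that $L$ is constant in the summation index, $\sum_{x:F}P=(\ifte{L}{\sum_{x:F'}P}{0})$, and the recursive call is consulted only in the $L$-true arm, exactly where its strengthened context $C\wedge L$ holds; Case~7 splits on a free-variable comparison literal $L$, and the two arms of the returned $\ifte{L}{S_1}{S_2}$ are invoked exactly under $C\wedge L$ and $C\wedge\neg L$, so the hypothesis applies in each.

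\textbf{Termination and the measure.} I take $\mu(F,C)$ to be a lexicographic tuple, roughly: (a) the number of literals of $F$ that mention the index $x$ (strictly decreased by the Case~3 substitution, under the understood guard that Case~3 fires only when $x$ genuinely occurs outside its defining equality); (b) the number of literals of the form $x\ge\alpha$ or $x<\beta$ (strictly decreased by Case~6, non-increasing elsewhere); (c) the total number of literals of $F$ (strictly decreased by Cases~2 and~4, unchanged by Cases~3, 6, 7); and (d) the number of comparison literals between bound-or-disequality expressions occurring in $F$ that are still undecided by $C$. Components (a)--(c) are visibly non-increasing outside the case that decreases them. The delicate component is (d), needed for Case~7, which does \emph{not} change $F$ and only strengthens $C$: by the explicit guard, $L$ is a comparison $C$ did not already decide, so in each recursive call $L$ becomes decided while no previously-decided comparison is lost (strengthening a conjunctive clause only decides more); and because after Cases~0--6 the set of expressions occurring in $F$ is fixed and never grows (Case~3's substitution occurs at most once along any branch, and Cases~2 and~4 only delete), this pool of candidate comparisons is finite, so (d) strictly drops. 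This is the main obstacle: Case~7 is the only recursive case that does not simplify $F$, so progress must be charged entirely to the context, and making (d) simultaneously finite and strictly decreasing forces one to argue carefully that the universe of candidate splitter literals stays bounded over the whole recursion (which is also why a context is mandatory here, as the footnote anticipates: without it Case~7 would re-select the same $L$ indefinitely).

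\textbf{Complexity independent of domain size.} Every non-recursive action is cheap and domain-size-free: the triviality checks and substitutions are linear in $|F|$; each satisfiability query in Cases~0, 7, 8 is a call to the assumed SMT solver on a conjunctive difference-arithmetic clause, which is polynomial; and Case~8 returns $\sum_{x:l<x\le u}P(x,\by)$ in closed form as a polynomial in $l$ and $u$ via the extended Faulhaber formula, \emph{without} iterating over the $u-l$ values, plus finitely many terms $P(\beta_i,\by)$. By the measure argument the recursion tree has depth and size bounded polynomially in the number of literals of the input $F$, and no branching factor depends on the cardinality of any sort. Hence the total running time is a function of the input expression only, not of the domain sizes of $x$ or $\by$ --- modulo the caveat already recorded in the theorem's footnote that arithmetic on the numeric bounds themselves is logarithmic rather than literally constant.
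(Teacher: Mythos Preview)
Your argument is correct and follows the same case-by-case route as the paper's own (very brief) sketch, but supplies what the paper omits: an explicit well-founded lexicographic measure to justify termination, and a careful check that each recursive case preserves the equation under the strengthened context---including the observation that Case~7 is precisely the place where the context is indispensable for progress. One small overclaim worth fixing: the recursion tree's \emph{size} need not be polynomial in $|F|$, since Case~7 is a genuine binary split that can fire $O(m^2)$ times along a branch (for $m$ bound and disequality expressions), giving up to $2^{O(m^2)}$ leaves; however this bound is still independent of the domain sizes of $x$ and $\by$, so the theorem's stated conclusion is unaffected.
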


\begin{proof}
	(Sketch) Cases 0-2 are trivial (Case 0, in particular, is based on the fact
	that any solution is correct if $C(\by)$ is false).
	
	Cases 3 and 4 cover cases in which $x$ is bounded to a value and successively
	eliminate all other literals until trivial Case 5 applies.
	The left lower box of Figure \ref{fig:sgdpllt-example} exemplifies this pattern.
	
	Case 6 and 7 gradually determine a single strict lower bound $l$ and non-strict
	upper bound $u$ for $x$, determine that $l < u$,
	as well as which expressions $\beta_i$ constrained to be distinct from $x$
	are within $l$ and $u$, and are distinct from each other.
	This provides the necessary information for Case 8 to use Faulhaber's formula
	and determine a solution.
	The right lower box of Figure \ref{fig:sgdpllt-example} exemplifies this pattern.
\end{proof}

\section{Computing Faulhaber's extension $\id{FH}$}
\label{sec:faulhaber}

We now proceed to explain how $FH$ can computed the sum

\begin{align*}
	\sum_{x : l < x \leq u} t_0 + t_1 x + \dots + t_n x^n
\end{align*}
where $x$ is an integer index and
$t_i$ are monomials, possibly including numeric constants and powers of free variables.

\define{Faulhaber's formula} \cite{knuth93faulhaber} solves the simpler sum of powers problem $\sum_{k=1}^n k^p$:
\begin{align*}
	\sum_{k=1}^n k^p = {\frac{1}{p+1}} \sum_{j=0}^p (-1)^j{p+1 \choose j} B_j n^{p+1-j},
\end{align*}
where $B_j$ is a \emph{Bernoulli number} defined as
\begin{align*}
	B_j &= 1 - \sum_{k=0}^{j-1}\binom jk\frac{B_k}{j-k+1} \\
	B_0 &= 1.
\end{align*}


The original problem can be reduced to a sum of powers in the following manner,
where $t, r, s, v, w$ are families of monomials (possibly including numeric constants) in the free variables:
\begin{align*}
	& \sum_{x : l < x \leq u} t_0 + t_1 x + \dots + t_n x^n \\
	& = \sum_{i=0}^n \quad \sum_{x : l < x \leq u} t_i x^i \\
	& = \sum_{i=0}^n \sum_{x=1}^{u-l} t_i (x+l)^i \\
	& = \sum_{i=0}^n \sum_{x=1}^{u-l} t_i \sum_{q=0}^i r_q x^q \quad\text{(by expanding the binomial)}\\
	& = \sum_{i=0}^n \sum_{x=1}^{u-l} \sum_{q=0}^i t_i r_q x^q \\
	& = \sum_{i=0}^n \sum_{q=0}^i t_i r_q \sum_{x=1}^{u-l} x^q \;\text{(inverting sums to apply Faulhaber's)}\\
	& = \sum_{i=0}^n \sum_{q=0}^i {t_i r_q \over q+1} \sum_{j=0}^q (-1)^j{q+1 \choose j} B_j (u-l)^{q+1-j} \\
	& = \sum_{i=0}^n \sum_{q=0}^i \sum_{j=0}^q s_{i,q,j} (u-l)^{q+1-j} \\
	& = \sum_{i=0}^n \sum_{q=0}^i \sum_{j=0}^q s_{i,q,j} \sum_{l=1}^{q+1} v_l \quad\text{(by expanding the binomial)}\\
	& = \sum_{i=0}^n \sum_{q=0}^i \sum_{j=0}^q \sum_{l=1}^{q+1} s_{i,q,j}v_l\\
	& = w_0 + w_1 + \dots + w_{n'} \quad\text{(since $n$ is a known constant)}
\end{align*}
where $n'$ is function of $n$ in $O(n^4)$ (the time complexity for computing Bernoulli numbers up to $B_n$ is in $O(n^2)$).

Because the time and space complexity of the above computation depends on the initial
degree $n$ and the degrees of free variables in the monomials,
it is important to understand how these degrees are affected.
Let $d_l$ be the initial degree of the variable present in $l$ in $t$ monomials.
Its degree is up to $n$ in $r$ monomials (because of the binomial expansion with $i$ being up to $n$), and thus up to $d_l + n$ in $s$ monomials (because of the multiplication of $t_i$ and $r_q$).
The variable has degree up to $n + 1$ in monomials $v$, with degree up to $d_l + 2n + 1$
in the final polynomial.
The variable in $u$ keeps its initial degree $d_u$ until it is increased by up to $n + 1$ in $v$,
with final degree up to $d_u + n + 1$.
The remaining variables keep their original degrees.
This means that degrees grow only linearly over multiple applications of the above.
This combines with the $O(n^4)$ per-step complexity to a $O(n^5)$ overall complexity
for $n$ the maximum initial degree for any variable.
Note how this time complexity is constant in $x$'s domain size.

\bibliographystyle{named}
\bibliography{StarAI-16}

\end{document}